\newtheorem{lemma}{Lemma}
\newtheorem{theorem}{Theorem}
\title{\LARGE \bf  Computing Motion Plans for Assembling Particles with Global Control\footnote{This is the full version of a paper that appears at IROS 2023~\cite{blumenberg2023computing}}}
\author{Patrick Blumenberg\footnote{Department of Computer Science, TU Braunschweig,  Germany. \tt{\{blumenbe, aschmidt\}@ibr.cs.tu-bs.de}}\, , Arne Schmidt$^{\dagger}$, 
Aaron T. Becker$^\dagger$\footnote{Department of Electrical Engineering, University of Houston, USA. \tt{ atbecker@central.uh.edu}}
\thanks{This work was supported by
the Alexander von Humboldt Foundation and the National Science Foundation under \href{https://www.nsf.gov/awardsearch/showAward?AWD_ID=1932572}{CNS 932572} and
\href{https://www.nsf.gov/awardsearch/showAward?AWD_ID=2130793}{IIS 2130793}.}}
\begin{document}

\maketitle
\begin{abstract} 
 We investigate motion planning algorithms for the assembly of shapes in the \emph{tilt model} in which unit-square tiles move in a grid world under the influence of uniform external forces and self-assemble according to certain rules. 
 We provide several heuristics and experimental evaluation of their success rate, solution length,  runtime, and memory consumption.
\end{abstract}

\section{Introduction}
In the \emph{tilt model} micro particles move under a global control such as gravity or a magnetic force. 
On actuation, particles move to the designated direction unless blocked by walls or other particles. Compatible particles bond on contact. As an example, see Fig.~\ref{fig:example_instance}.
Previous work \cite{Becker2013, Becker2017, zhang2017rearranging, Schmidt2018, BalanzaMartinez2020, Balanza-Martinez2019} considered the assembly problem that asks whether a given shape can be assembled in the tilt model.
However, these works are mostly theoretical and provide hardness proofs and runtime complexity.
In this paper, we search for efficient motion plans to assemble shapes in a given environment.
In particular, we provide several heuristics and an evaluation based on randomly-generated instances.

\subsection{Related Work}

Motion planning is a fundamental and well-studied research topic in the field of robotics. 
The general problem, known as the \emph{reconfiguration problem}, is to find a sequence of moves of one or multiple robots to transform an initial configuration, i.e., the set of all position of robots and obstacles, to a specific target configuration while avoiding collisions. 
Even when all robots are rectangular, this problem is PSPACE-hard~\cite{Hopcroft1984}.
To solve problems of this type, several heuristics have been developed: 
\emph{search-based} planning algorithms~\cite{Bonet2001} such as A*, which build a graph structure over the configuration space and then use graph-search algorithms to find a path;
\emph{potential field method}, in which an artificial potential field represents attractive and repulsive forces that guide the path of the robots~\cite{Yun1997};
\emph{sampling-based} methods like probabilistic roadmaps (PRM)~\cite{Kavraki1996} and rapidly exploring random trees (RRT)~\cite{Lavalle1998} that can be used to discover a path in high-dimensional configuration spaces (e.g. humanoid robots or multi-robot systems).

In this paper, we consider search-based algorithms as well as RRTs to solve motion planning problems in the tilt model.

\begin{figure}[H]
\centering
\includegraphics[scale=.5,trim=0cm 15.3cm 0cm 0cm, clip]{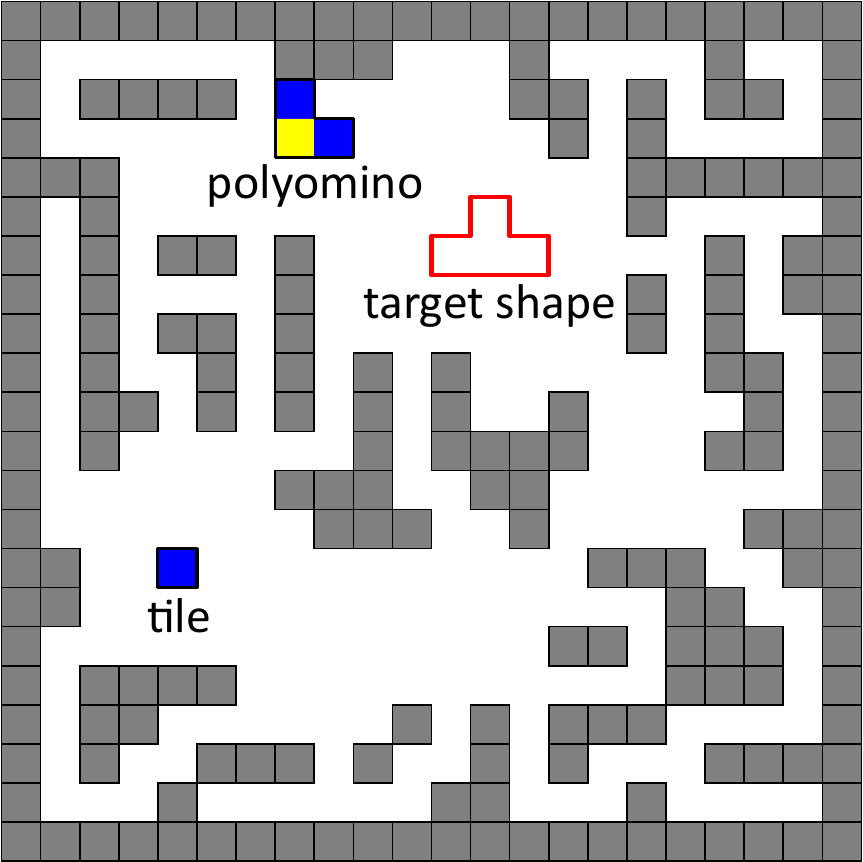}
\includegraphics[scale=.5,trim=0cm 0cm 0cm 15.2cm, clip]{figures/example_boards/fig1_colored.pdf}
\caption[Example instance of the Polyomino Assembly Problem]{Two example instances of the Polyomino Assembly Problem. The blocked positions on the board are shown as grey squares and the tiles as colored squares.
Colored sides correspond to glue types. 
}
\label{fig:example_instance}
\end{figure}

The tilt model of motion planning, introduced in~\cite{Becker2013}, presents a possible solution for scenarios in which a collection of particles or robots that cannot be moved individually needs to be reconfigured. 
When an external force is switched on, all particles move maximally (full tilt motion model as used in~\cite{Becker2017}) or one unit step (single-step tilt model as used in~\cite{Keller2021,Becker2020a}) into the given direction of the force.
In this model, various types of problems have been investigated:
\emph{Assembly problem}, i.e., can a given shape be constructed~\cite{Becker2017, Schmidt2018, Balanza-Martinez2019, BalanzaMartinez2020,Keller2021}; 
\emph{Gathering problem}, i.e., how fast can particles collected in a specific area~\cite{Becker2020a,konitzny2022gathering}; 
\emph{Occupancy Problem}, i.e., can a specific location be reached by at least one particle~\cite{BalanzaMartinez2020, Caballero2020a}; 
\emph{Relocation Problem}, i.e., can a specific particle be moved to its designated position~\cite{Balanza-Martinez2019, Balanza-Martinez2019a, Caballero2020b}; and \emph{Reconfiguration Problem}, i.e., can a configuration be reached where every particle is in its goal position~\cite{Becker2014,Balanza-Martinez2019}. 
Many of the tilt problems are at least NP-hard. In particular, the reconfiguration problem is PSPACE-complete~\cite{Balanza-Martinez2019}.
If the workspace can be designed in advance, it can be shown that rearranging shapes can be performed efficiently~\cite{caballero2021fast,zhang2017rearranging}.

To the best of our knowledge, no other work on the design and experimental evaluation of tilt-related motion planning algorithms in the context of shape assembly has been published. 
Our simulation code and test instance generation are shared at GitHub\footnote{ \href{https://github.com/RoboticSwarmControl/2023TiltMotionPlan/}{https://github.com/RoboticSwarmControl/2023TiltMotionPlan/}}.

\section{Preliminaries}

A \textbf{\emph{board}} $B$ is a rectangular region of $\mathbb{Z}^2$ with each position either marked as \emph{open} or \emph{blocked}. Blocked positions represent obstacles and cannot contain tiles. Furthermore, the induced grid graph $G_{B}$ is defined as the graph in which the open positions of $B$ are nodes and two nodes are connected if and only if their distance is 1. $B$ is called \emph{connected} if and only if $G_{B}$ is connected.

A \textbf{\emph{tile}} $t= (p, g)$ is a unit square centered on open position $p$ with edge \textbf{\emph{glues}} $g := (g_{N}(t),$ $g_{E}(t), g_{S}(t), g_{W}(t))$ of glues. $g_{d}(t)$ denoting the glue on the edge of $t$ facing in the cardinal direction $d \in \{N,E,S,W\}$. Each glue is an element of a finite alphabet $\Sigma$, which also contains a special \texttt{null} glue. A glue function $G \colon \Sigma \to \{0, 1\}$ defines which glues stick to each other. Two glues $g_{1}, g_{2} \in \Sigma$ stick to each other with respect to $G$ if and only if $G(g_{1}, g_{2}) = 1$. Furthermore, $G$ has the properties $G(g_{1}, g_{2}) = G(g_{2}, g_{1})$ and $G({g_{1}, \texttt{null}}) = 0$ for all $g_{1}, g_{2} \in \Sigma$. Tiles \emph{bond} if and only if they are located on adjacent positions and the glues on the shared edge stick together. The bond is considered permanent, and the tiles hereafter move as a unit.

A \textbf{\emph{polyomino}} $P$ is a finite set of tiles that forms a connected component in the graph in which bonded tiles are adjacent.  
The position of $P$ is defined as the position of the top-leftmost tile in $P$.

A \textbf{\emph{configuration}} is a tuple $C = (B, T)$, consisting of a board $B$ and a set of tiles $T$ on open positions. 

A \textbf{\emph{step}} is a transformation between two configurations moving every polyomino (including single tiles) by one unit in one of the cardinal directions unless blocked by an obstacle.
Polyominoes that do not move are called \emph{blocked}.
For simplicity, a step can be denoted by the direction $d \in \{N, E, S, W\}$ in which the polyominoes are moved. 
A \emph{step sequence} is a series of steps.

Given a configuration $C= (B, T)$ and a connected set of open positions $X \subseteq B$, the \textbf{\emph{Polyomino Assembly Problem}} (PAP) asks for either a step sequence $S$ such that consecutively applying the steps of $S$ to $C$, results in a configuration in which a polyomino $P$ exists that satisfies $X = \{p_{1}, p_{2},\ldots \}$, where $p_{1}, p_{2},\ldots$ are the positions of the tiles in $P$, or a proof that the configuration is unreachable.
If there is a fixed tile $t_{\text{fixed}} \in T$ that does not move under step transformations, and tiles can only bond if the resulting polyomino contains $t_{\text{fixed}}$, then we call this problem the \textbf{\emph{Fixed Seed Tile Polyomino Assembly Problem}} (FPAP).

Based on a reduction used in~\cite{Caballero2020a}, we can show that FPAP is PSPACE-complete, even if there are exactly as many tiles on the board as required for the target shape.

\begin{lemma}
	The Fixed Seed Tile Polyomino Assembly Problem is in PSPACE
\end{lemma}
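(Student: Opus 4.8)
The plan is to prove membership in $\mathrm{PSPACE}$ for the decision version of FPAP --- given a configuration $C=(B,T)$, the fixed seed tile $t_{\text{fixed}}\in T$, and a connected target set $X$ of open positions, decide whether some step sequence applied to $C$ yields a polyomino that contains $t_{\text{fixed}}$ and occupies exactly $X$ --- by first placing it in $\mathrm{NPSPACE}$ and then invoking Savitch's theorem. Two facts drive the argument: every configuration reachable from $C$ can be stored in space polynomial in the input size, and the (deterministic) successor of a configuration under a fixed step $d\in\{N,E,S,W\}$ can be computed in polynomial time.

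For the encoding, treat the tiles as having fixed identities $1,\dots,|T|$ with static glues (part of the input) and record only the dynamic data: the current position in $B$ of each tile, together with the current bond graph on the tiles. Because the seed-restricted rule (``tiles can only bond if the resulting polyomino contains $t_{\text{fixed}}$'') makes the active bonds depend on history, it is cleanest to carry this graph explicitly rather than recompute it from positions; the whole encoding uses $O(|T|\log|B| + |T|^2)$ bits. The $\mathrm{NPSPACE}$ procedure starts at $C$ and repeatedly guesses a direction $d$, replacing the current configuration $C'$ by its successor, which is computed by: (i) determining by a monotone fixpoint which polyominoes are blocked when all polyominoes are pushed one unit in direction $d$ (a polyomino is blocked if that push would make one of its tiles overlap a blocked board cell or a tile of a blocked polyomino); (ii) translating every unblocked polyomino by one unit in direction $d$; and (iii) forming all admissible new bonds, i.e.\ bonds between newly adjacent tiles with sticking glues whose merged component contains $t_{\text{fixed}}$, and merging the affected polyominoes. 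All of this is plainly polynomial time. After each step the procedure checks whether the polyomino containing $t_{\text{fixed}}$ occupies exactly the positions of $X$ and accepts if so.

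To guarantee termination within the space budget, the procedure also maintains a binary step counter and rejects once it exceeds the number of configurations reachable from $C$; since tile positions range over $B$ and bond graphs over subsets of the $\binom{|T|}{2}$ tile pairs, that number is at most $|B|^{|T|}\cdot 2^{\binom{|T|}{2}} \le 2^{\mathrm{poly}(|C|)}$, so the counter needs only polynomially many bits, and whenever an assembling sequence exists a shortest one visits no configuration twice and hence stays under the bound. This shows FPAP $\in \mathrm{NPSPACE} = \mathrm{PSPACE}$; since $\mathrm{PSPACE}$ is a deterministic class, the same bound decides both the reachable and the unreachable instances. There is no genuine obstacle here: the only points demanding care are stating the step transition precisely --- in particular the cascading of blocked polyominoes and the seed-restricted bonding --- and fixing the configuration count so that the bounded search is complete.
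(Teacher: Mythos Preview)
Your proof is correct and follows the same approach as the paper: place the problem in $\mathrm{NPSPACE}$ by nondeterministically guessing step directions while storing only the current configuration, then invoke $\mathrm{NPSPACE}=\mathrm{PSPACE}$. The paper's proof is a two-sentence sketch of exactly this; your version simply spells out the encoding, the step-transition computation, and the termination counter in more detail than the paper does.
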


\begin{proof}
	Repeated, non-deterministic selection of control inputs until the target shape is assembled solves the problem and only requires us to store the current configuration. Therefore, the problem is in NPSPACE, which is equal to PSPACE.
\end{proof}

\begin{lemma}
	The Fixed Seed Tile Polyomino Assembly Problem is PSPACE-hard
\end{lemma}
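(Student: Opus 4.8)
The plan is to build a polynomial-time reduction from a known PSPACE-complete tilt problem — specifically the occupancy or relocation variant from \cite{Caballero2020a}, since the paper explicitly says FPAP PSPACE-hardness is ``based on a reduction used in'' that work. Concretely, I would take an instance of the (full-tilt or single-step) occupancy problem: a board $B$, an initial set of tiles, and a target open position $p^\ast$, with the question of whether some step sequence brings a tile to $p^\ast$. From this I would construct an FPAP instance whose board is $B$ augmented with a small local gadget near $p^\ast$, place a fixed seed tile $t_{\text{fixed}}$ adjacent to $p^\ast$ (or at a dedicated pocket reachable only from $p^\ast$), give that seed tile and exactly one ``probe'' tile matching glues, and make every other tile glueless (all \texttt{null} glues) so that no spurious bonding can occur. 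The target shape $X$ would be the two-tile polyomino consisting of $t_{\text{fixed}}$ together with a tile in the cell $p^\ast$ (or whatever cell forces a tile to have passed through $p^\ast$). Then a step sequence assembles $X$ if and only if the original occupancy instance is solvable.

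The key steps, in order, are: (1) pin down exactly which source problem and tilt-motion variant (full tilt vs.\ single step) is being reduced from, and confirm it is PSPACE-complete for that variant — this must match the step semantics of Section~2; (2) describe the gadget that turns ``a tile reaches $p^\ast$'' into ``a tile bonds to $t_{\text{fixed}}$,'' making sure the gadget does not create new ways to reach $p^\ast$ and does not let tiles slip past without bonding; (3) argue the glue assignment (seed and probe get a matching non-null glue, everything else \texttt{null}) guarantees the only polyomino ever formed is the target and that it contains $t_{\text{fixed}}$ as required by the FPAP definition; (4) verify the ``exactly as many tiles as required for the target shape'' claim — since $X$ has only two tiles, I would need the source instance (or a trivially padded version of it) to have exactly two tiles, or more plausibly I would let $X$ be a larger shape and arrange that each extra tile sits in a forced, already-occupied position so no surplus tiles exist; (5) prove both directions of the reduction's correctness and note that it is computable in polynomial (indeed logarithmic) space/time.

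The main obstacle I expect is step (2) together with the tile-count constraint in step (4): getting a clean gadget that (a) is ``one-way,'' so that arriving at $p^\ast$ is both necessary and sufficient for the bond, and (b) is compatible with having \emph{no} extra tiles floating around — stray unbonded tiles in the occupancy instance would either have to be parked in dead-end pockets or incorporated into the target shape $X$ in forced positions. Designing the dead-end pockets so that tiles driven into them during the assembly of $X$ cannot later escape and interfere (and cannot block the probe tile's path) is the delicate part; this is exactly the kind of bookkeeping that \cite{Caballero2020a}'s construction already handles, so I would lean on adapting their gadget rather than inventing one from scratch. A secondary subtlety is ensuring the fixed-seed constraint (``tiles can only bond if the resulting polyomino contains $t_{\text{fixed}}$'') does not accidentally make the problem easier or change reachability — but since I am making all non-probe tiles glueless, this constraint is automatically satisfied and harmless. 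Once the gadget and tile-count accounting are settled, the two-way correctness argument is routine.
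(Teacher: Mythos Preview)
Your high-level plan---reduce from a PSPACE-hard tilt problem and arrange glues so that bonding can only happen at the seed---matches the paper's strategy. But the proposal has a correctness gap and is missing the key construction idea.

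The gap: you say you reduce from \emph{occupancy} (does \emph{some} tile reach $p^\ast$?), yet you designate exactly one ``probe'' tile with a non-\texttt{null} glue. Then a YES occupancy instance need not yield a YES FPAP instance, because the tile that reaches $p^\ast$ in the occupancy solution may not be the probe. What you actually need is a \emph{relocation}-type source problem (a specific tile to a specific cell), and indeed the paper reduces from the $k$-region relocation problem of~\cite{BalanzaMartinez2020}.

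More importantly, the paper resolves precisely the obstacle you flag in step~(4) by a trick you do not have: \emph{tile inversion}, taken from~\cite{Caballero2020a}. Starting from a $k$-region relocation instance, one connects the goal regions into a single bottom row and then places a tile on every open cell that was \emph{empty} in the original instance. The target shape $X$ is the entire open region except the $k$ leftmost cells of that row, and the fixed seed is placed $k$ cells to the right of the distinguished bottom-row tile $t_{\text{reloc}}$. A three-glue chain (glue $A$ on the seed, $B$ on $t_{\text{reloc}}$, $C$ on every other tile, with $G(A,B)=G(B,C)=G(C,C)=1$) guarantees that nothing bonds until $t_{\text{reloc}}$ reaches the seed, at which point the entire board instantly assembles into $X$. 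This simultaneously (i) makes the number of tiles equal to $|X|$ with no leftovers, (ii) removes any need for dead-end pockets or parked tiles, and (iii) makes the two directions of the reduction immediate. Your two-tile target with a local gadget does not achieve any of this; the inversion is the idea you should import.
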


\begin{figure}
	\centering
	\includegraphics[width=\textwidth]{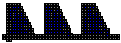}
	\caption[Reduction of the $k$-region relocation problem to Fixed Seed Tile Polyomino Assembly Problem]{Reduction of the $k$-region relocation problem to the Fixed Seed Tile Polyomino Assembly Problem. The red square is the fixed seed tile, the yellow tile $t_{\text{reloc}}$ needs to be relocated $3$ positions to the right in order to assemble the target shape. The blue squares are other tiles. Figure recreated and adapted from \cite{Caballero2020a}.}
	\label{fig:hardnessproof}
\end{figure}

\begin{proof}
	To show PSPACE-hardness we adapt a proof from ~\cite{Caballero2020a} and reduce from the \emph{$k$-region relocation problem} of the full-tilt model~\cite{BalanzaMartinez2020}. In this problem, $k$ disjoint regions each contain a single tile. The goal is to move all $k$ tiles to the $1 \times 3$ region at the bottom of their respective component using tilt transformations. Analogous to \cite{Caballero2020a}, we start the reduction by connecting the $1 \times 3$ regions to a single bottom row and invert tile placement by placing tiles on all open spaces that did not contain a tile in the original instance of the $k$-region relocation problem. Additionally, we place a fixed seed tile $k$ positions to the right and $1$ down from the leftmost tile in the bottom row $t_{\text{reloc}}$, as shown in Figure \ref{fig:hardnessproof}. We assign glues as follows: The fixed seed tile has glue $A$ on all four edges. $t_{\text{reloc}}$ has glue $B$ on all edges. Every other tile has glue $C$ on all edges. Furthermore, we define the glue function $G$, such that $G(A,B) = G(B,C) = G(C,C) = 1$ and $G(X, Y) = 0$ for all other pairs of glues $X, Y$. Finally, we define the target shape as all open spaces inside the connected region, except for the $k$ leftmost positions in the bottom row. Now a solution to the constructed instance of the Fixed Seed Tile Polyomino Assembly Problem corresponds to a solution to the original instance of the $k$-region relocation problem. The key idea is, that tiles can only bond if they are connected to the seed tile and that only $t_{\text{reloc}}$ can directly bond with the seed tile according to the glue function. Therefore, $t_{\text{reloc}}$ must be relocated $k$ positions to the right, which was shown to solve the original $k$-region relocation problem (in \cite{Caballero2020a}). Conversely, once $t_{\text{reloc}}$ has been successfully moved $k$ spaces to the right, the target shape is immediately assembled, as all open spaces, except the $k$ leftmost positions in the bottom row, are filled with tiles that can bond according to the glue function and are connected to the fixed seed tile via $t_{\text{reloc}}$.
\end{proof}

\begin{theorem}
	The Fixed Seed Tile Polyomino Assembly Problem with extra tiles is PSPACE-hard.
\end{theorem}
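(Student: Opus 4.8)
The plan is to reduce from the Fixed Seed Tile Polyomino Assembly Problem \emph{without} extra tiles, which is PSPACE-hard by the previous lemma, and to transform any such instance into an equivalent one that has strictly more tiles than the target shape requires by adding \emph{inert, isolated} extra tiles that provably cannot influence the assembly. Concretely, given an FPAP instance consisting of a board $B$, a tile set $T$ with fixed seed $t_{\text{fixed}}$, a glue function $G$, and a target shape $X$, I would enlarge the bounding rectangle of $B$ and append, far away from and separated by a full layer of blocked positions from the region containing $X$ and $T$, a small patch of $m \ge 1$ pairwise non-adjacent open cells, each of whose four neighbours is blocked. On each of these cells I place one extra tile, assigning the \texttt{null} glue to all four edges of every extra tile, and I leave the original part of $B$, the set $T$, the seed $t_{\text{fixed}}$, the function $G$, and the shape $X$ unchanged. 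The new instance has $|T| + m > |X|$ tiles, so it is a legitimate instance of FPAP with extra tiles, and it is clearly computable in polynomial (indeed logarithmic) space.

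For correctness I would establish the two elementary facts that make the added tiles harmless. First, each extra tile is \emph{immobile}: all four neighbouring positions are blocked, so it is blocked in every step and never leaves its cell; moreover every polyomino moves only onto open positions adjacent to the ones it occupies, so the tiles of $T$ remain forever in the connected component of open positions in which they start, which is disjoint from the extra cells. Hence no extra tile is ever in the way of a main polyomino, and no main tile ever reaches an extra cell. Second, each extra tile is \emph{non-bonding}: since $G(g,\texttt{null}) = 0$ for every glue $g$, an extra tile can never bond with anything and therefore stays a one-tile polyomino throughout, so it can never be incorporated into the polyomino that is supposed to realise $X$ (which in any case occupies only main-region cells). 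Combining the two facts, for any step sequence $S$ the run of $S$ on the new instance induces on the tiles of $T$ exactly the trajectory of $S$ on the original instance; therefore a polyomino whose tile positions are exactly $X$ appears at some point in the new run if and only if the very same polyomino appears in the original run. Thus the two instances are equivalent, and the theorem follows.

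I do not expect a real obstacle: the only point that needs care is the non-interference argument, and it is pinned down entirely by the two observations above (immobility from the blocked neighbourhood, non-bonding from the \texttt{null} glue), together with the fact that the definition of a board requires only the target set $X$, not the board's open positions, to be connected. If one wished to keep the board connected, one would instead need a genuine trapping gadget for the extra tiles, which is more delicate in the full-tilt model; but since disconnected boards are permitted, the isolated-cells construction is the cleanest route.
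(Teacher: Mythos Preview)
Your argument is correct, but it differs from what the paper actually does. In the paper the theorem is stated without a separate proof; it is meant to follow immediately from the preceding lemma, because the hardness instance produced there (with exactly $|X|$ tiles) is already a legal instance of the general FPAP in which extra tiles are \emph{allowed}. In other words, the paper treats ``with extra tiles'' as ``extra tiles are permitted'', so the no-extra-tiles case is a special case and the theorem is a one-line corollary of Lemma~2.

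You instead read ``with extra tiles'' as requiring strictly more tiles than $|X|$ and give an explicit padding reduction: append isolated, fully blocked-in open cells carrying \texttt{null}-glued tiles. Your two invariants (immobility from the blocked neighbourhood, non-bonding from the \texttt{null} glue) are exactly what is needed, and they hold in the single-step model used here; the board need not be connected, as you note. So your reduction is sound and in fact establishes the slightly stronger statement that FPAP remains PSPACE-hard even when the instance is forced to contain at least one surplus tile. The paper's route is shorter; yours is more robust to the stricter reading of the theorem.
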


As a basis for the definition of our heuristic functions, we use the following definitions of distance between positions, tiles and the target shape.
Consider a configuration $C= (B, T)$ and let $p = (p_1, p_2), q = (q_1, q_2)$ be open positions in B.
Then $d_M(p)$ denotes the length of a shortest path from $p$ to any position from $M\subseteq B$ in $G_{B}$,
and
	$d_{1}(p, q) \coloneqq |p_1 - q_1| + |p_2 - q_2|$ is called the taxicab distance.
For the sake of brevity, we define $d(t) \coloneqq d(p_t)$ and analogously $d_1(t,q) := d_1(p_t,q)$ for $t=(p_t, g_t) \in T$.
\section{Algorithmic Approaches}\label{sec:Algorithms}
In this section, we investigate algorithmic approaches for solving the Polyomino Assembly Problem with and without fixed seed tiles. We focus on best-first search algorithms with different heuristics, some of which are admissible, and therefore lead to an A* search that finds solutions of optimal length. 
These heuristics are then further divided into \emph{simultaneous construction}, allowing multiple subassemblies, and \emph{incremental construction}, adding single tiles to the target shape while maintaining separation of the remaining tiles.
We also investigate pruning techniques to avoid unnecessary computation. In addition to the best-first search algorithms, we propose an approach using RRTs.

\subsection {Simultaneous Construction}
Simultaneous construction uses a best-first search with a heuristic based on the distance of the available tiles.

\subsubsection{Greatest Distance heuristic (GD)}

The GD is an admissible heuristic that can be used with an A* search. In this case, the value of a heuristic function $h_{\text{GD}}(C, X)$ is added to the distance of the current configuration from the initial configuration, i.e., the number of moves required to get to the current configuration.
In particular, $h_{\text{GD}}(C, X)$ is defined as follows.
Let $C = (B, T)$ be a configuration and $X$ a target shape with $|X| = n$. 
Then $h_{\text{GD}}(C, X)$ is the distance of the $n$-th nearest tile in $T_{\text{available}}$ to $X$, where $T_{\text{available}}$ is the set of tiles which are part of some polyomino $P$ that can be moved to a position where it is contained in $X$.

In order to decide if a given tile is in $T_{\text{available}}$, our implementation remembers for each polyomino if it is able to reach the target shape and fit into the target shape. These properties are only reevaluated whenever the polyomino changes.
GD is a consistent heuristic because a single step, which has a cost of $1$, applied to a configuration reduces the heuristic value of GD by a maximum of $1$.

$h_{\text{GD}}$ can be used in combination with a greedy best-first search approach by not adding the distance from the initial configuration to the heuristic value. If the heuristic is used this way, we refer to the resulting algorithms as Greedy Greatest Distance (GGD). The idea of this approach is to give up the optimality of the solution in order to potentially improve the execution speed.

\subsubsection{Pruning methods}
There are multiple pruning methods that can be used together with the simultaneous construction approach. Firstly, if $|T_{\text{available}}| < n$, the configuration can never lead to a solution.
Furthermore, when it can be shown that there is no subset of existing polyominoes which can exactly cover the target shape, the branch can be pruned.
In general, however, it may not be computationally feasible to decide whether such a tiling of the target shape exists whenever a polyomino changes.
In case there are exactly as many tiles on the board as required for the target shape, we determine if the $k$ largest polyominoes can be packed into the target shape. If this is not possible then the branch can be pruned. In our implementation, $k$ is set to $3$.

Alternatively, when only one single polyomino exists on the board, we check if this polyomino fits into the target area and if it can be moved to the target area.
Although this may not yield optimal solutions (even if a A* search with a constant heuristic is used), the solution is at most $d_{max}$ steps longer, where $d_{max}$ is the maximum distance between any two open positions.

\subsection{Incremental Construction}

In contrast to simultaneous construction, incremental construction uses multiple consecutive best-first searches to add tiles to a polyomino one after another. For that reason, the heuristic function used in each of the best-first searches depends on the positions of the subassembly and the tile that we are currently trying to add. To make this possible, an incremental construction method needs to determine a suitable set of tiles and a building order in which these tiles can be added to the target polyomino. Each best-first search attempts to keep all tiles that are not involved in the current construction step separated from each other by pruning configurations with undesired subassemblies. Importantly, this approach is not a complete solution, since it is not always possible to avoid the creation of multiple subassemblies. Furthermore, an obtained solution is generally not of optimal length, even if each construction step consists of an A* search with a consistent heuristic.

\subsubsection{Minimum Moves to Polyomino heuristic (MMP)}
Given a configuration $C = (B, T)$, a target shape $X$, and a Polyomino $P$, the MMP heuristic provides a lower bound on the number of moves required to move a selected tile $t= (p_t, g_t)$ to a 
position with coordinates $x$ adjacent to $P$.
Although $x$ may not be an open position, we consider $x$ to be one for the heuristic function, which is defined as follows.
\begin{equation*}
h_{\text{MMP}}(C, X, t, x) \coloneqq \left\lceil \dfrac {d(t, x) - d_{1}(t, x)}{2} \right\rceil+ d_{1} (t, x),
\end{equation*}
i.e., $h_{\text{MMP}}$ is a lower bound on the number of moves required to move $t$ to the target position relative to $P$.

A disadvantage of this heuristic is that it requires knowledge of the pairwise distance between all pairs of the $m$ open positions on the board. These distances can be computed in advance by breadth-first searches starting on each open position in time $O(m^2)$.

\subsubsection{Minimum Moves to Polyomino or Target heuristic (MMPT)}
The MMPT heuristic aims to reduce the need to calculate pairwise distances. To achieve this, MMPT only uses the estimated number of required moves for the heuristic value calculation if the current subassembly and the selected tile are within a specific distance (threshold distance $d_t$) from the target area. Define the target area $A$ as the set of all open positions that are reachable (i.e., can be covered) by the final polyomino. The target area can be computed with a breadth-first search. 
If $t$ or $x$ is not near the target area, the maximum distance of the two components to the target area is used as a basis for the heuristic value instead. To prioritize configurations where both components are near the target area, we apply a weighting factor $w = |A|$ case. 
\begin{equation*}
h_{\text{MMPT}}(C, X, t, x) \coloneqq 
\begin{dcases}
h_{\text{MMP}}(C, X, t, x), \\
    \qquad\text{if } d_{A}(p_t) \leq d_t \land d_{A}(x) \leq d_t\\
w \cdot \max(d_{A}(p_t), d_{A}(x)),\\
    \qquad\text{otherwise}
\end{dcases}
\end{equation*}

This approach has two main advantages over MMP. Firstly, it only requires the computation of pairwise distances within the proximity of the target area. Secondly, it encourages the formation of subassemblies within the reachable area of the target shape. \par

\subsubsection{Distance to Fixed Position heuristic (DFP)}
In the case of a fixed seed tile, a much simpler heuristic can be used.
Because the target position of the tile is fixed, each move can reduce the length of a shortest path from the tile to the target position by at most $1$. 
Therefore:
\begin{equation*}
h_{\text{DFP}}(C, X, t, x) \coloneqq d(t, x)
\end{equation*}
Additionally, during the computation of the shortest paths to the target position the positions of tiles contained in the fixed polyomino, as well as neighboring positions that are impassable for the selected tile because of glues on the edges of a fixed tile, are marked as blocked.

\subsubsection{Pruning methods}
All of the incremental construction approaches use the same two pruning methods. Firstly, a branch is pruned if its configuration contains more than one subassembly. Secondly, a branch is pruned if the existing subassembly cannot be moved to a position where it is contained in the target shape.

\subsubsection{Computation of the building order}
A brute-force algorithm is used to determine a tiling of the target shape such that all tiles bond. A recursive backtracking algorithm is then utilized to remove tiles from the polyomino one at a time, in order to determine a potential building order. 
In each iteration, the algorithm checks if the remaining tiles are still connected by glues and if a path exists along which the selected tile can be moved outside of the enclosing rectangle without being blocked by other tiles or glues.
If all tiles can be removed from the polyomino in this way the reversed deconstruction order is a \emph{potential} building order.
This approach is directly inspired by research on the constructibility of polyominoes under tilt transformations \cite{Becker2017}.
If no building order is found, a different tiling of the polyomino is attempted until a potential building order is found or all possible tilings were considered.
A building order found in this way is not guaranteed to be realizable on the board of the problem instance. If any of the best-first searches during the motion planning process terminate without finding a solution we proceed with the next candidate building order.

\subsection{Rapidly-expanding Random Tree (RRT)}
The basic idea of an RRT is to select a random configuration from the configuration space and expand the configuration in the current tree that is closest according to some \emph{cost-to-go} function towards the selected configuration.
When designing an RRT-based algorithm for the Polyomino Assembly Problem, the main challenge is to find a cost-to-go function that is fast to compute and provides a reasonable estimate of the distance between two configurations.

Let $C_1 = (B, T_1)$ and $C_2 = (B, T_2)$ be two configurations on the same board. 
For two tiles $t_1 = (p_1, g_1) \in T_1$ and $t_2 = (p_2, g_2) \in T_2$ we define
\begin{equation*}
d_H(t_1, t_2) \coloneqq
\begin{cases}
d(p_1, p_2) , & \text{if } g_1 \equiv g_2 \\
\infty , & \text{otherwise.}
\end{cases}
\end{equation*}
Then we define the distance between a tile $t$ and a set of tiles $S$ as $D_H(t, S) \coloneqq \min_{s \in S} d_H(s, t)$.
Note that $D_H(t_1, T_2) < \infty$ for any $t_1 \in T_1$, and vice versa, because both sets of tiles have the same sets of glues. 

Now the cost-to-go-function between $C_1 = (B, T_1)$ and $C_2 = (B, T_2)$ is defined \mbox{equivalently} to the Hausdorff distance \cite{hausdorff} between the two sets of tiles.
\begin{equation*}
D_H(C_1, C_2) \coloneqq \max \left\{	\max_{t_{1} \in T_{1}} d(t_1, T_2) , \max_{t_{2} \in T_{2}} d(t_2, T_1) \right\}
\end{equation*}

Additionally, if a polyomino exists in $C_1$ that does not fit into any polyomino of $C_2$ we set the cost-to-go to $\infty$. 

Since the number of times the cost-to-go function is evaluated in each expansion step grows with the number of nodes in the RRT, we try to further reduce the computational cost of repeatedly evaluating the cost-to-go function by keeping a sparser tree. This is achieved through a more expensive expansion step, which expands the closest node by multiple steps in the direction of the randomly selected configuration. For that purpose, a greedy best-first search with a limited number of iterations is used to minimize the cost-to-go. 

A downside of this cost-to-go function is that it requires knowledge about the pairwise distances between positions of the board. A potential solution is to use the taxicab distance instead of the length of a shortest path as the underlying distance metric. However, this would further decrease the accuracy of the estimated distance. 

To estimate the distance to the goal, i.e., any configuration containing the target shape, a different metric must be used because the order of tiles within the target shape and the position of possible left-over tiles are not known. Therefore we choose the greatest distance among the $n$ nearest available tiles, as defined by the GD approach, as an estimate for the distance to the goal. We bias the RRT search to expand the closest viable node directly towards the goal in $5\%$ of the expansion steps. For these expansion steps, the GGD approach with a limited number of iterations is used. If the distance from the selected node to the goal cannot be reduced in this way, the node is marked and the next expansion towards the goal will use the next closest node. 
\section{Simulation Setups}\label{sec:SimulationSetup}
The algorithmic approaches are evaluated on sets of procedurally generated instances with different features, including instances with and without a fixed seed tile. 
We investigate the impact of  the number of tiles, size of the board, obstacle placement, etc., on the problem difficulty.
The analysis focuses on the success rate, runtime, and length of the solutions. 

\subsection{Method for Instance Creation}
 We implemented a Python function that procedurally generates instances based on six input parameters.
 
 \begin{description}
 	\item[Board type] Two board types are used:  \emph{Maze} starts on a board filled with blocked positions. A randomized recursive backtracking algorithm creates a tree of open spaces. Next, a number of rectangular open regions proportional to the size of the board are added uniformly at random. 
 	\emph{Cave} boards are created by cellular automata. In the next generation a dead cell becomes alive if it has five or more living neighbors;  a living cell becomes dead if it has less than four living neighbors. Each cell starts alive with a probability of $0.45$ and two generations are applied.  
 	Living cells are blocked positions. 
 	To ensure connectedness, the largest connected component of open spaces is determined and all other open spaces are filled.
 	\item[Target shape size]
 	The number $n$ of tiles required to assemble the target shape.
 	\item[Board size] The edge length of the board.
 	Only square-shaped boards were used. 
 	\item[Number of extra tiles]
 	Tiles not needed to create the target shape. The total number of tiles is the sum of the target shape size and the number of extra tiles.
	\item[Number of different glues]
 	The glues on the edges of each tile are selected uniformly at random from the set of available glues. To make it likely for instances to have a solution, a subset of the tiles is arranged into the target shape and a set of rules is added such that they bond. Finally, additional rules are selected at random, until at least half of all possible combinations of glues stick together.
	\item[Problem type] Whether a fixed seed tile exists or not. If a fixed seed tile exists, tiles can only bond if the resulting polyomino contains the fixed tile.
 \end{description}

On all generated boards, the open positions form one connected region.
To determine initial tile positions, tiles are successively placed on legal positions uniformly at random. A legal position is open, does not contain a tile, and is not adjacent to a position containing a tile. 
If a fixed seed tile is required, it is placed first on a suitable position within the target shape.  Although some measures were taken to increase the probability that the created instances are solvable, instances are not guaranteed to have a solution.

\subsection{Simulation Environment}
All motion planning algorithms were implemented in a self-developed simulator based on TumbleTiles\footnote{TumbleTiles: \url{https://github.com/asarg/TumbleTiles}}. The original TumbleTiles software was upgraded from Python 2 to Python 3 and heavily modified. 
It can simulate step transformations on configurations with and without fixed seed tile.

Experiments were conducted on multiple computers, each with the same specifications (\textbf{Intel\textsuperscript{\textregistered} Core\textsuperscript{TM} i7-6700K CPU @ 4x4.00GHz, 64GB RAM}) running Ubuntu 20.04 LTS. 

\subsection{Evaluated Approaches}
\begin{enumerate}
\setlength\itemsep{0.2em}
\item Breadth-first-search (BFS)
\item Greatest Distance heuristic (GD)
\item Greedy Greatest Distance heuristic (GGD)
\item Minimum Moves to Polyomino heuristic (MMP)
\item Minimum Moves to Polyomino or Target heuristic (MMPT): $d_t = 4$.
\item Distance to Fixed Position heuristic (DFP)
\item Rapidly-exploring Random Tree (RRT):  Each expansion step consists of $40$ iterations of a greedy best-first search.
The bias towards the goal is $5$\%. If within 7 distance to the goal, a best-first search limited to $500$ iterations attempts to find a solution.
\end{enumerate}

For each solver, all applicable pruning methods discussed in Sec.~\ref{sec:Algorithms} were used.
In combination with the simultaneous construction heuristic approaches GD and GGD, the alternative stop condition is used if the configuration contained no extra tiles.

\subsection{Experimental Procedure}
Two instance sets were created in advance and contain the same instances for every experiment. Both contain five procedurally generated instances for each  combination of parameters ($2160$ instances each) from the following table:
\begin{center}
\begin{tabular}{ |l|c|c| }
\hline
\texttt{InstanceSet} & \texttt{1} & \texttt{2}\\
\hline
Board & \multicolumn{2}{c|}{cave, maze} \\
\hline
Target shape size & $3 , 4, 5 , 6$ & $5, 10, 13, 15$ \\
\hline
Board size & $20 , 30, 40$ & $40, 80, 120$ \\
\hline
Extra tiles & $0, 1, 3$ & $0, 3, 5$ \\
\hline
Number of glues & $1 , 2, 3$ & $1, 3, 5$ \\
\hline
Problem type & \multicolumn{2}{c|}{$\text{seed tile}, \text{no seed tile}$}  \\
\hline
\end{tabular}
\end{center}

\begin{figure}[t]
	\centering
	\includegraphics[angle=-90, width=.9\columnwidth]{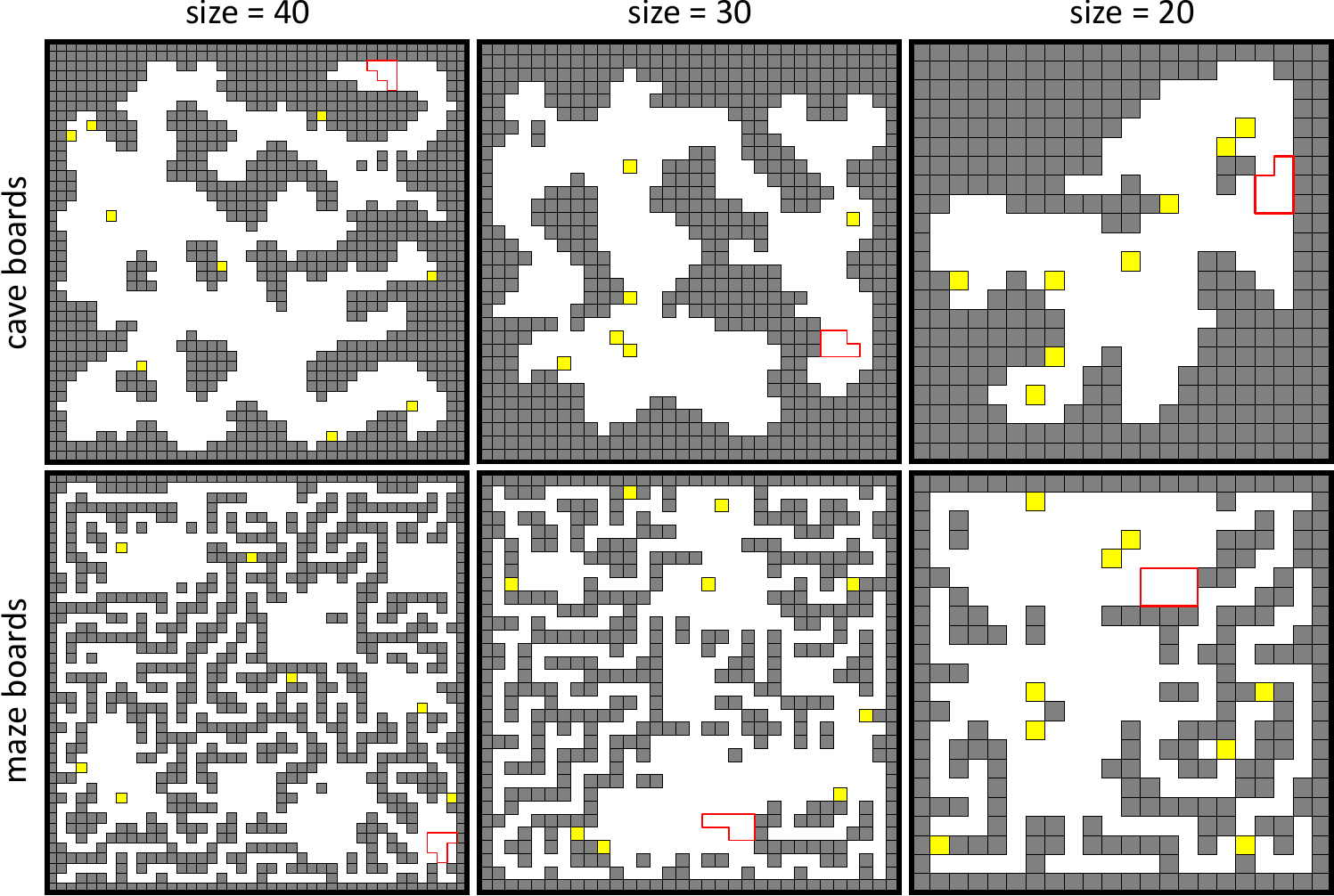}
	\caption{Six instances from \texttt{InstanceSet1}.
		Yellow squares are tiles and the target shapes are outlined in red. Glues are not shown.}
	\label{fig:example_boards}
\end{figure}
\texttt{InstanceSet1} consists of relatively easy to solve instances, that allow a comparison of all solvers, including the breadth-first search solver and other (near)optimal solvers.  Six instances are shown in Fig.~\ref{fig:example_boards}. 
\texttt{InstanceSet2} consists of instances with a wider range of difficulties. See Fig.\ref{fig:example_instance} for an example.
 Only GGD, MMPT, and DFP were used for \texttt{InstanceSet2}. DFP was only used on instances with a fixed seed tile. \par
All experiments were conducted with a timeout of 600 seconds per instance.
The problem instance, solution sequence (if solved), runtime,  peak memory usage, and number of nodes in the search tree (sum of nodes in all search trees for incremental construction) were recorded.

\section{Results of Simulation}\label{sec:Results}

Our algorithmic approaches are evaluated using the experiment results from both sets of instances.
\subsection{Evaluation of \texttt{InstanceSet1}}
The experiment results from \texttt{InstanceSet1} are used to measure the impact of various parameters on the performance of different motion planning algorithms. 
The relatively small size of these instances allows every solver to solve a fraction of the instances within the timeout of 10 minutes.
This data is used to evaluate the impact of configuration parameters on the efficiency of the different motion planning algorithms and compare memory requirements.
Breadth-first search is used as the baseline for performance comparison.

\subsubsection{Comparison of simultaneous construction algorithms}
The performance of the best-first search motion planning algorithms GD and GGD is compared with breadth-first search. 
As expected, all heuristic searches show a better performance in terms of runtime and success rate than BFS (see Fig.~\ref{fig:hs_i1_performance2}). 
The A* search with the consistent heuristic GD takes longer to find a solution and solves fewer instances within the time limit than the greedy best-first search approach GGD. 
All solvers show a decrease in successfully solved instances with an increasing number of tiles.
Compared to other parameters such as board size, the number of tiles has the greatest performance impact. 

Conversely, we observe that the board size has a greater impact on the length of the found solution than the number of tiles.
The near-optimal motion planning algorithm GD produces solutions of approximately the same length as BFS (see Fig.~\ref{fig:hs_i1_performance3}) whereas the solution length of the greedy algorithm was greater by a factor of $3.76$ on average when compared only on instances for which both GD and GGD found a solution.
However, this factor decreases with an increase in board size as Table I indicates.

\begin{center}
\begin{SCtable}
\begin{tabular}{ |l|r| }
\hline
Board size & $\text{GGD} / \text{GD}$ length\\
\hline
$20 \times 20$ & $3.92 \pm 3.97$ \\
\hline
$30 \times 30$ & $3.67 \pm 3.57$ \\
\hline
$40 \times 40$ & $3.46 \pm 2.82$ \\
\hline
\end{tabular}
\caption*{Table I: Mean ratio of lengths of solutions found by GGD and GD to the same instance $\pm \text{ SD}$ for different board sizes.}
\vspace{-1em}
\end{SCtable}
\end{center}

\begin{figure}[htb]
\centering
\includegraphics[width=\columnwidth]{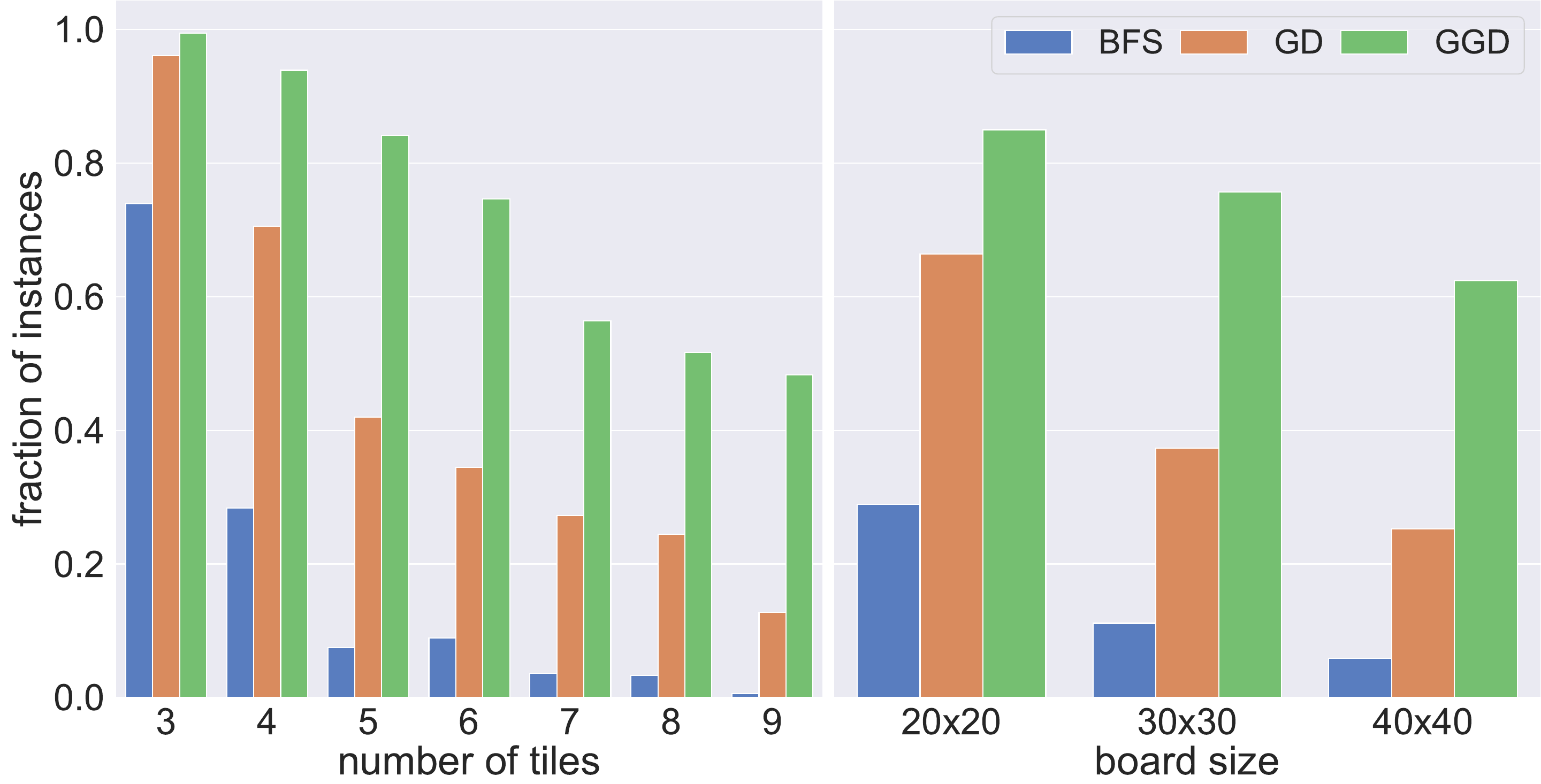}
\caption [Fraction of \texttt{InstanceSet1} solved by the search-based planners] {Fraction of instances from \texttt{InstanceSet1}  solved  by simultaneous construction algorithms.}
\label{fig:hs_i1_performance2}
\end{figure}

\begin{figure}[htb]
\centering
\includegraphics[width=\columnwidth]{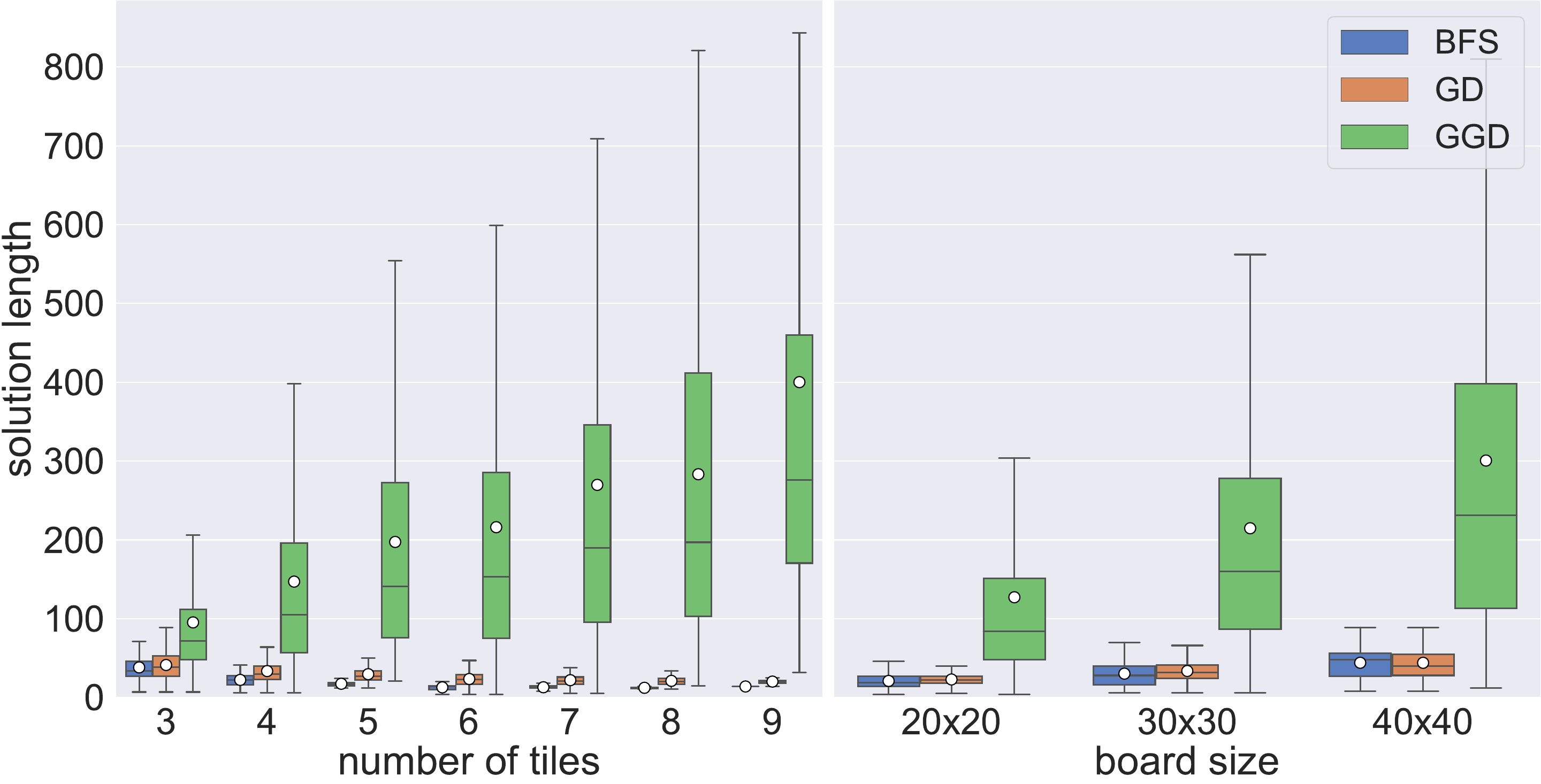}
\caption[Solution lengths for search-based heuristic planners on \texttt{InstanceSet1}]{Solution length of simultaneous construction algorithms on \texttt{InstanceSet1}. Only instances for which a solution was found are taken into account. The boxes show upper and lower quartile, the line the median, the whiskers extend to the most extreme value that is
within a proportion of 1.5 of the interquartile range. The white dot indicates the arithmetic mean. Outliers are omitted for better readability.}
\label{fig:hs_i1_performance3}
\end{figure}

\subsubsection{Comparison with other motion planning algorithms}
This section compares the incremental construction motion planning algorithms (MMP, MMPT, DFP), and RRT. 
For comparison, GGD is included in the plots.
As shown in Figs.~\ref{fig:i1_performance1} and \ref{fig:i1_performance2}, all incremental construction algorithms solve larger fractions of instances successfully than GGD on instances with a number of tiles greater than 5. The runtime on instances with many tiles is clearly better too. 
However, the runtime of MMP is particularly sensitive to an increase in board size.
MMPT does not have this downside and shows superior performance to MMP.
In general, the performance of the incremental approach degrades slower than the performance of other approaches when the number of tiles is increased.

The DFP motion planner can only be evaluated on instances of the Fixed Seed Tile Polyomino Assembly Problem. It has a high success rate of around $90\%$ on these instances regardless of the number of tiles and the size of the board. Furthermore, a large majority of instances that are solved by DFP are solved in less than 1 second. 
RRT displays a similar behavior as GGD but the runtime is more sensitive to increased board size. 

\begin{figure}[htb]
\centering
\includegraphics[width=\columnwidth]{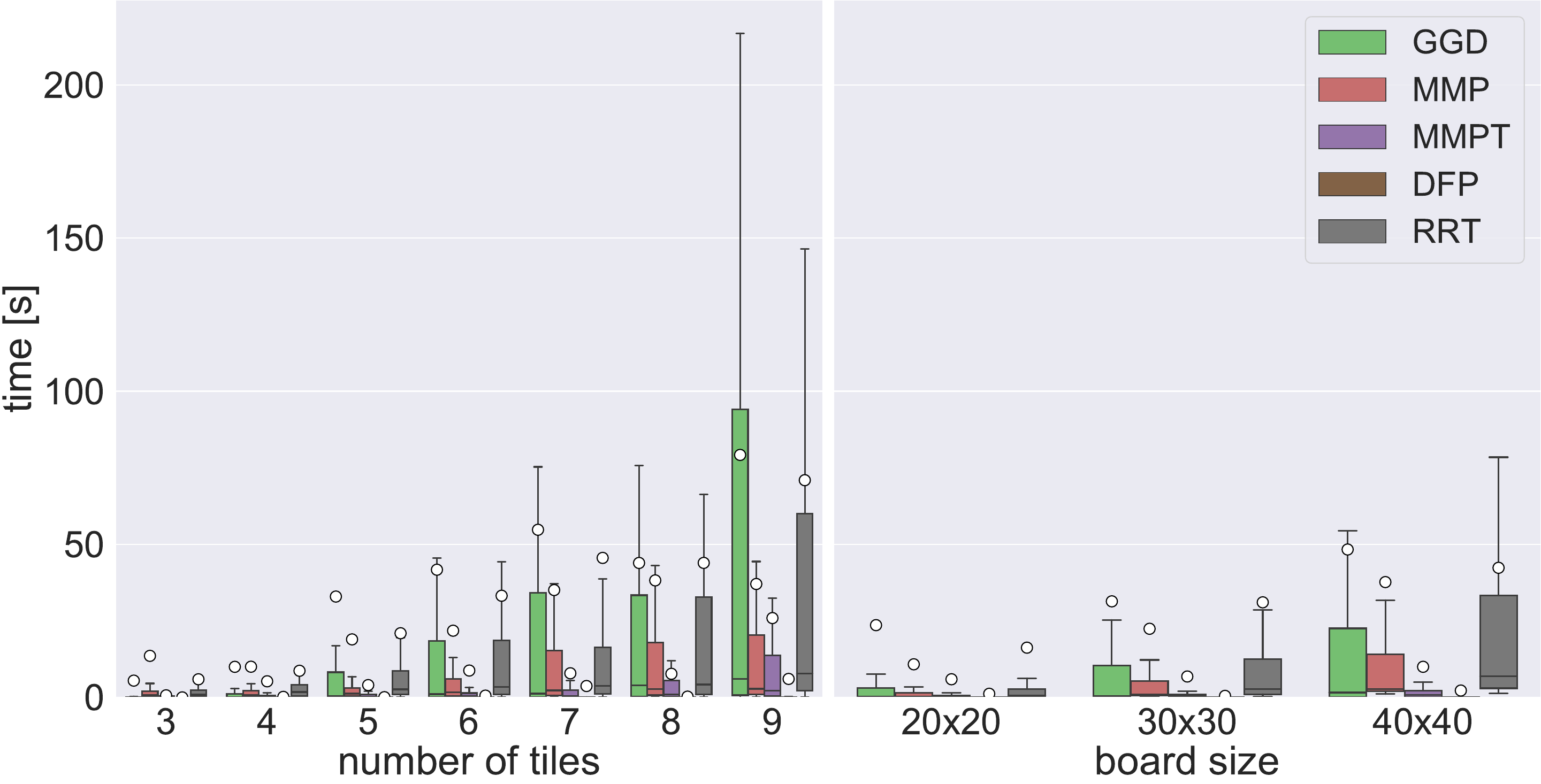}

\caption[Runtime of several planners on \texttt{InstanceSet1}]{Runtime  on \texttt{InstanceSet1} of GGD, incremental construction algorithms, and RRT. Only successful solutions are shown. Outliers are omitted.}
\label{fig:i1_performance1}
\end{figure}

\begin{figure}[htb]
\centering
\includegraphics[width=\columnwidth]{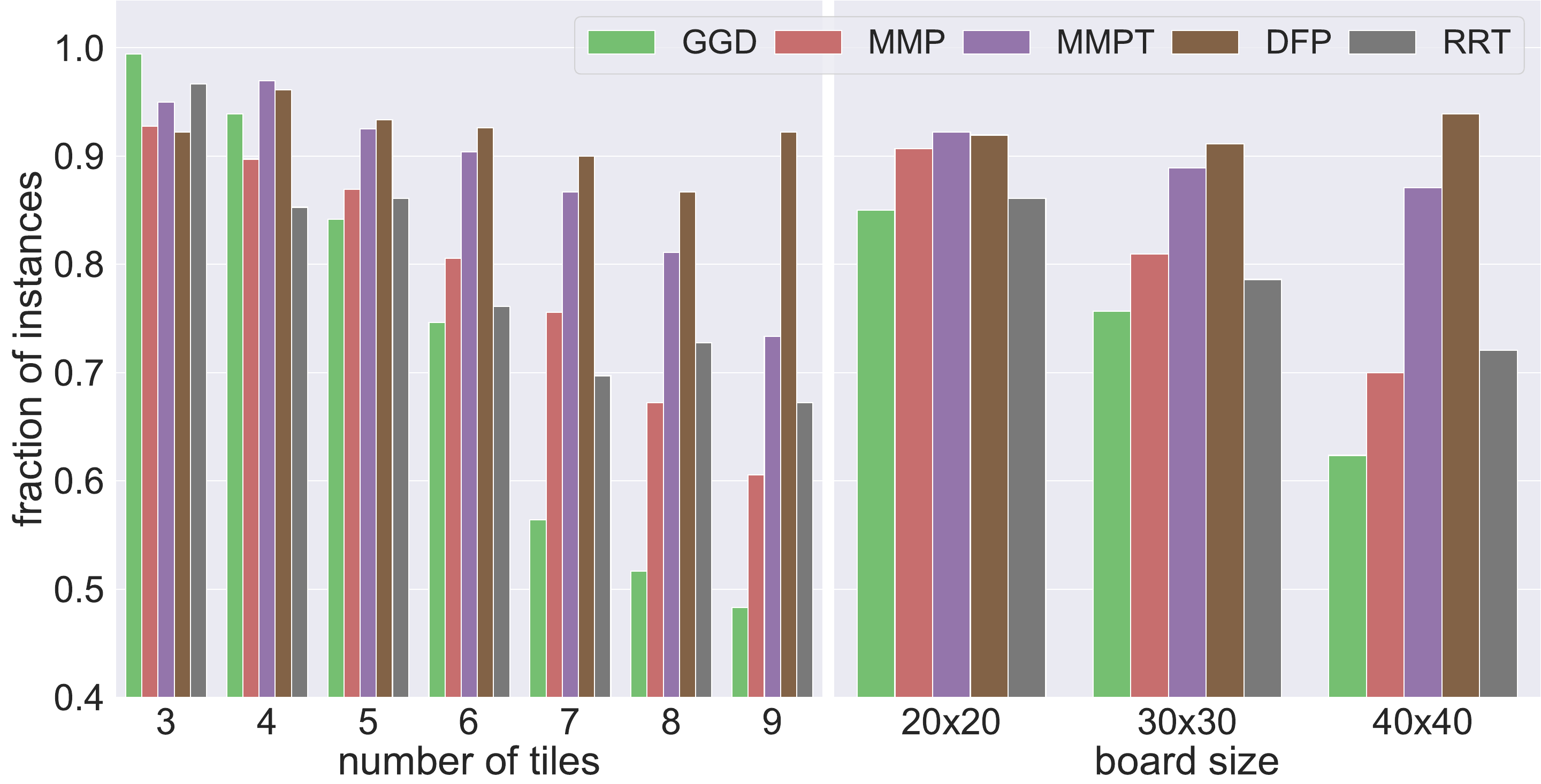}
\caption [Fraction of \texttt{InstanceSet1} solved by several planners] {Fraction of instances from \texttt{InstanceSet1} solved by GGD, incremental construction algorithms, and RRT. For DFP the fraction
of suitable instances with a fixed tile that were solved is shown.}
\label{fig:i1_performance2}
\end{figure}

\subsubsection{Comparison of the difficulty of the two problem types}
\texttt{InstanceSet1} contains instances of both the Polyomino Assembly Problem and the Fixed Seed Tile Polyomino Assembly Problem.
We compared the performance of three fundamentally different motion planning approaches (GGD, MMPT, RRT) for both problems in order to evaluate which problem is harder to solve in practice.
Figure \ref{fig:mmpt_fixed} shows the fraction solved by GGD. These results are similar to RRT. 
All three solvers demonstrate decisively faster runtime for the problem with a fixed seed tile (see Figure \ref{fig:mmpt_fixed_runtime}).
Furthermore, the specialized motion planning algorithm DFP for instances with a fixed seed tile solved a large fraction of the suitable instances in a short time, as Figures \ref{fig:i1_performance1} and \ref{fig:i1_performance2} indicate.

\begin{figure}[htb]
\centering
\includegraphics[width=\columnwidth]{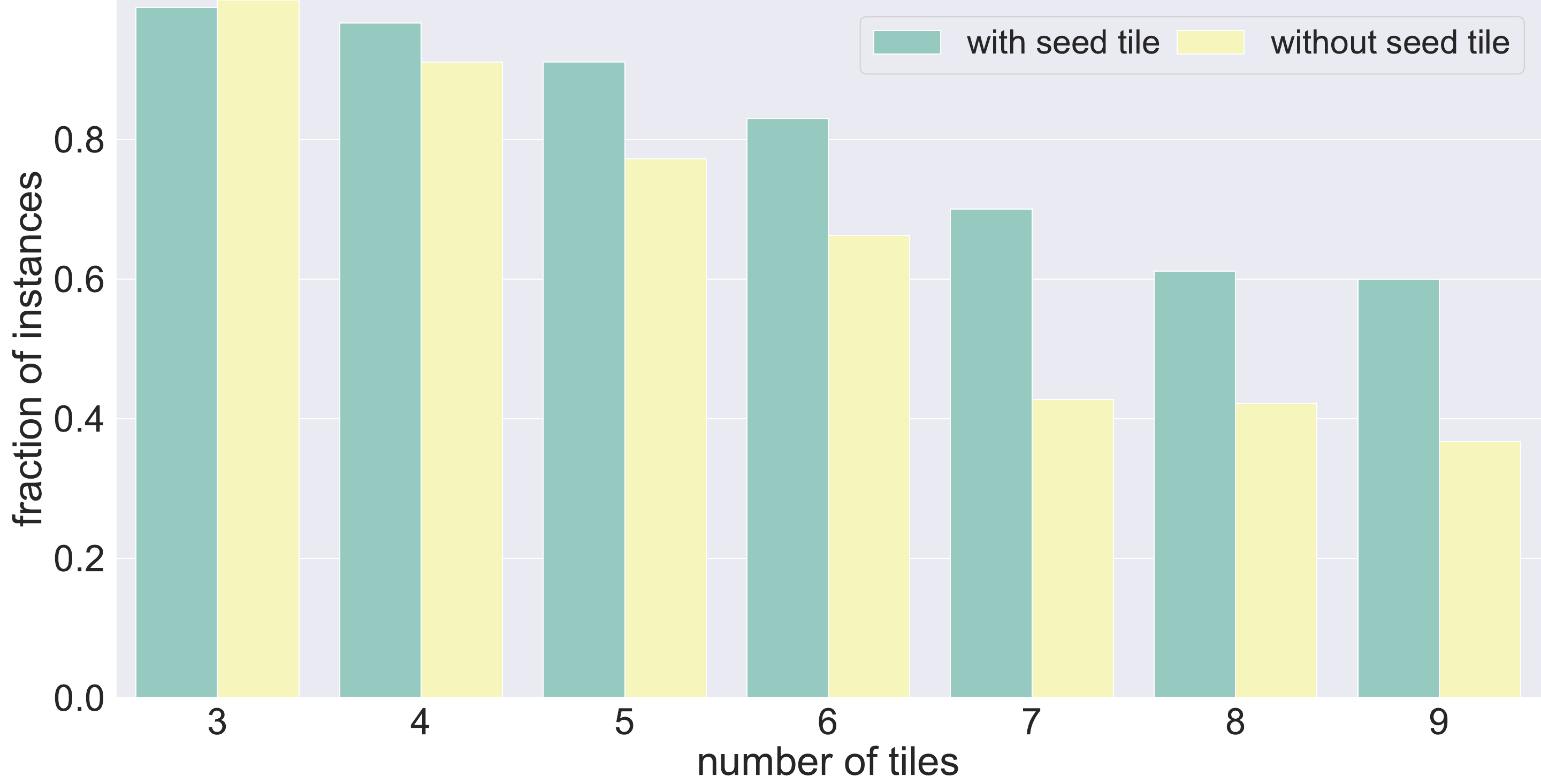}
\caption[Runtime comparison by problem type for GGD]{Comparison of the fraction of instances with and without fixed seed tile solved by the GGD solver.}
\label{fig:mmpt_fixed}
\end{figure}

\begin{figure}[htb]
\centering
\includegraphics[width=\columnwidth]{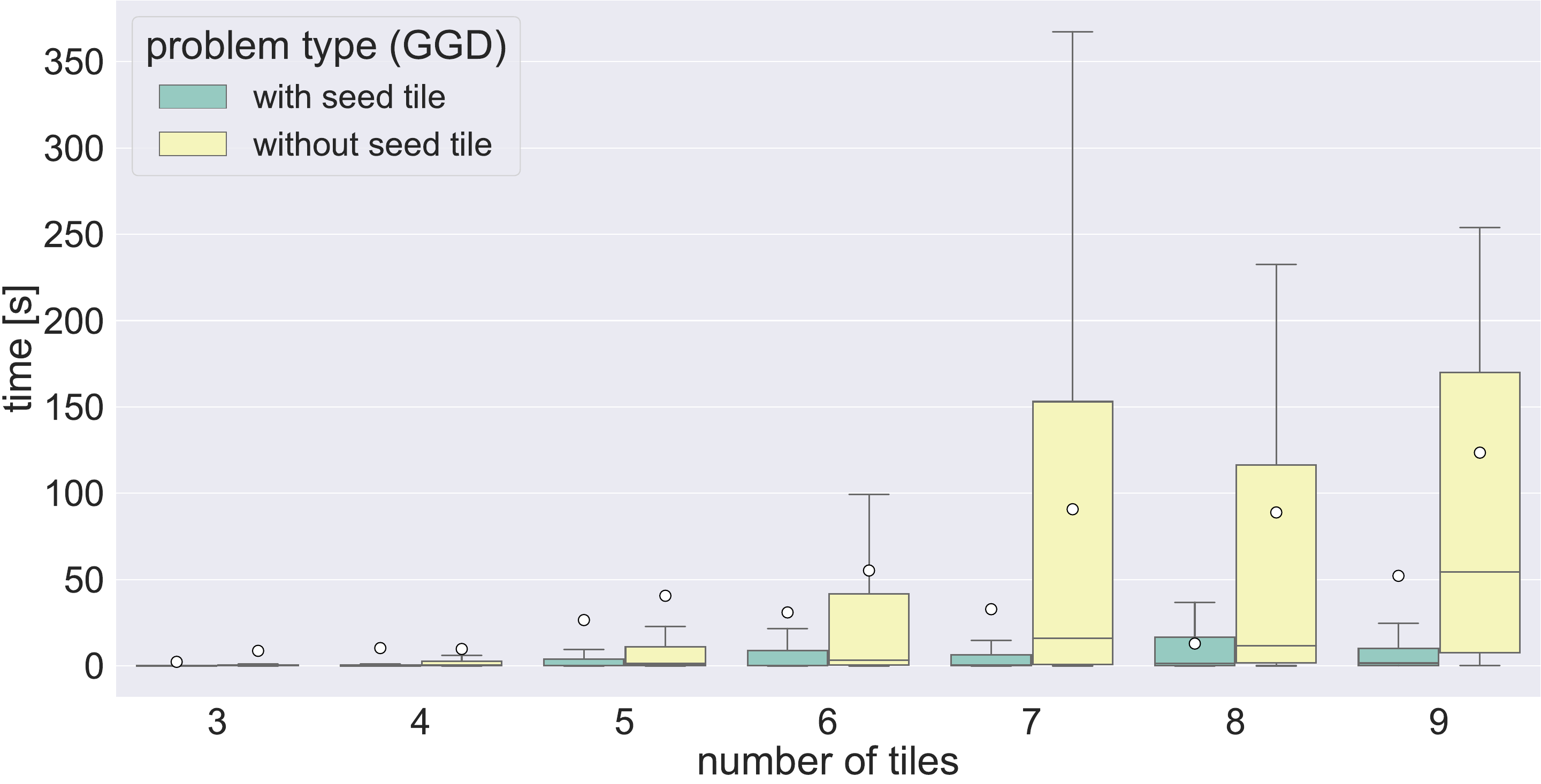}
\caption[Runtime comparison by problem type for GGD]{Comparison of the runtime of instances with and without fixed seed tile solved by the GGD solver.}
\label{fig:mmpt_fixed_runtime}
\end{figure}

\subsubsection{Memory usage}
The peak memory requirement of three motion planning algorithms depending on the time needed to solve an instance are shown in Fig.~\ref{fig:gad_i1_peek_memory_usage_over_time}.
For GGD and MMPT, the maximum of the peak memory usage grows linearly over time, because the speed at which nodes are expanded remains constant over time and the memory requirements per node remain constant. The memory usages of both algorithms cover a similar range with a maximum of around $8$\,GB and $9$\,GB respectively.
In contrast, the peak memory usage of RRT is two orders of magnitude smaller, because only a sparse tree of configurations is stored. Furthermore, it does not increase linearly over time. Instead, it increases sharply in the first few seconds and then slows down significantly. An explanation for this behavior is that every iteration requires the computation of the Hausdorff distance from a random configuration to all configurations in the RRT. As the number of nodes grows over time, each iteration takes more time than the previous one. Furthermore, up to a certain number of iterations the memory requirements for a single expansion step, which includes a heuristic search, outweigh the memory requirements of the RRT.

\begin{figure}[htb]
\centering
\includegraphics[width=\columnwidth]{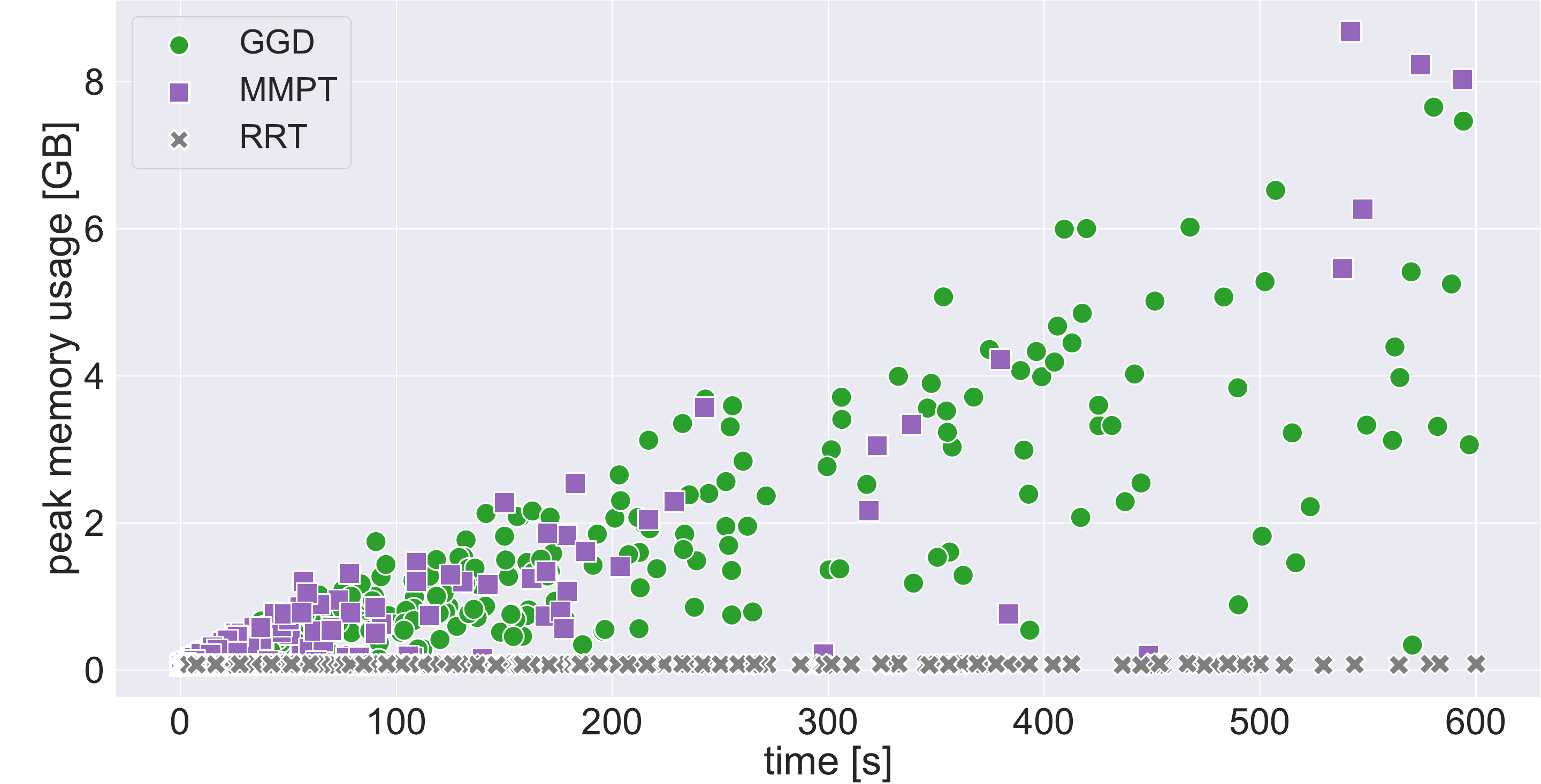}
\caption[Memory usage]{Peak memory usage of GGD, MMPT and RRT in GB depending on the runtime of the solver as scatter plot.}
\label{fig:gad_i1_peek_memory_usage_over_time}
\end{figure}

\subsection{Evaluation of \texttt{InstanceSet2}}
\texttt{InstanceSet1} results show that the near-optimal motion planning algorithms are usually unable to solve larger instances. Regarding the RRT solver, the calculation of the cost-to-go function based on the length of a shortest path is computationally infeasible for the larger boards in \texttt{InstanceSet2}. Therefore, we choose to compare GGD, MMPT, and DFP.

\subsubsection{Comparison of the motion planning algorithms}

The time needed by the motion planning algorithms and their success rate can be seen in Figs.~\ref{fig:i2_performance1} and \ref{fig:i2_performance2}. These plots are grouped by the target shape size, instead of the total number of tiles on the board. Instances can have up to $5$ additional tiles that are not needed to build the target shape. Generally speaking, the runtime of all algorithms increases with an increased size of the target shape. Again, all algorithms finish a majority of the solved instances long before the timeout, which indicates that there are certain instances that the solvers struggle with, whereas other instances can be solved easily.
The performance of GGD in terms of the fraction of solved instances continues to decrease sharply with an increase in the size of the target shape. MMPT shows an overall higher success rate but the performance also falls off when the target shape gets bigger. DFP performs best and shows a high success rate that only declines slowly with an increase in the target shape size and does not fall under $60\%$.  
Furthermore, DFP finds a solution much faster than all other algorithms. Since instances are not guaranteed to have a solution, it is possible that randomly generated instances with a larger target shape are less likely to be solvable, which could also be a factor for the fraction of solved instances.
The greater range of board sizes in \texttt{InstanceSet2} confirms that the runtime increases much faster with an increased number of tiles, rather than increased board size. In particular, the success rate of DFP is not decreased when the board size is increased from $40 \times 40$ to $120 \times 120$.

\begin{figure}[htb]
\centering
\includegraphics[width=\columnwidth]{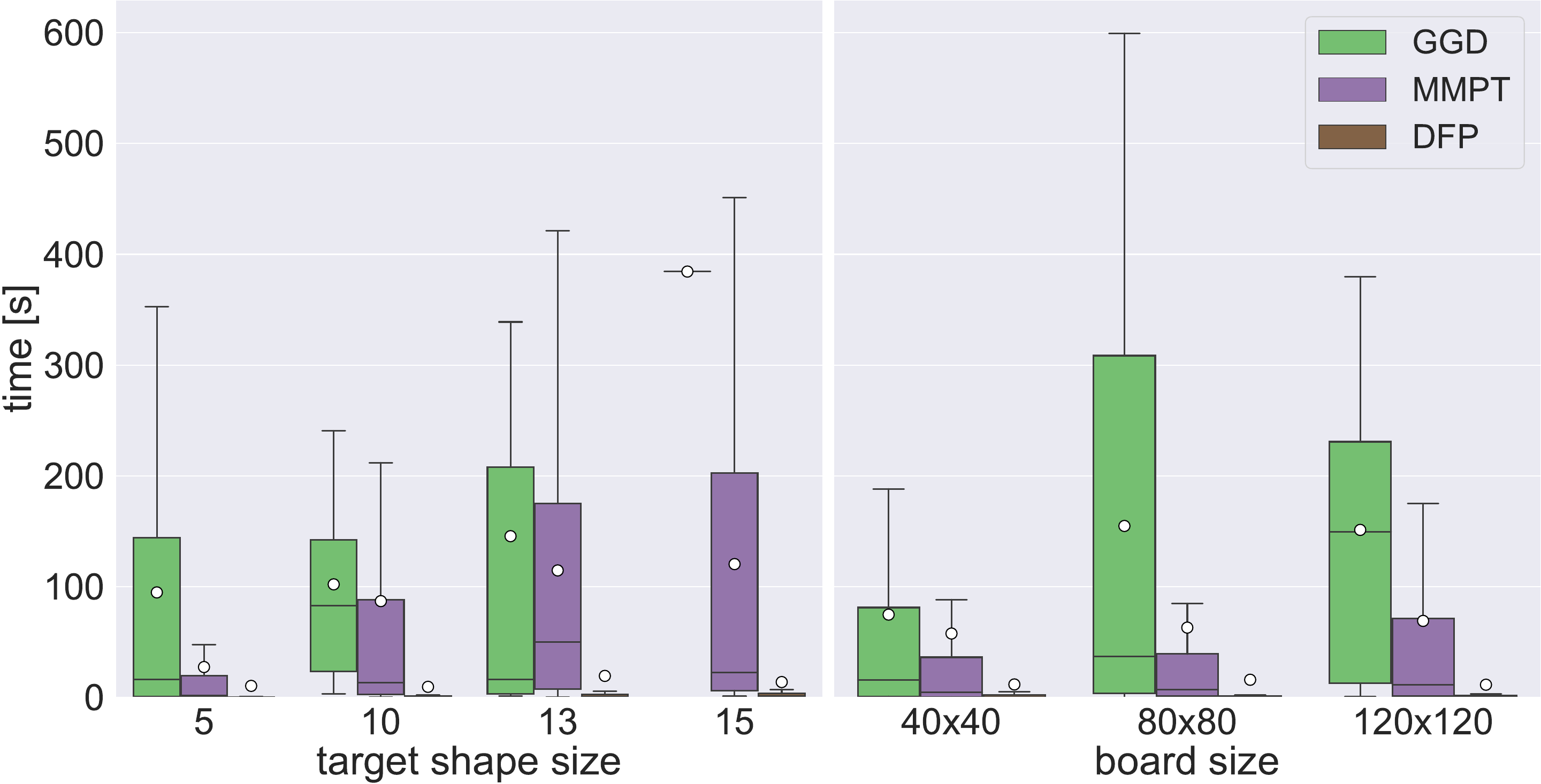}
\caption[Runtime of several motion planners on \texttt{InstanceSet2}]{Comparison of the runtime of three motion planning algorithms on \texttt{InstanceSet2}.  Only successful solutions are shown. GGD only solved 1 instance for target size 15.}
\label{fig:i2_performance1}
\end{figure}

\begin{figure}[htb]
\centering
\includegraphics[width=\columnwidth]{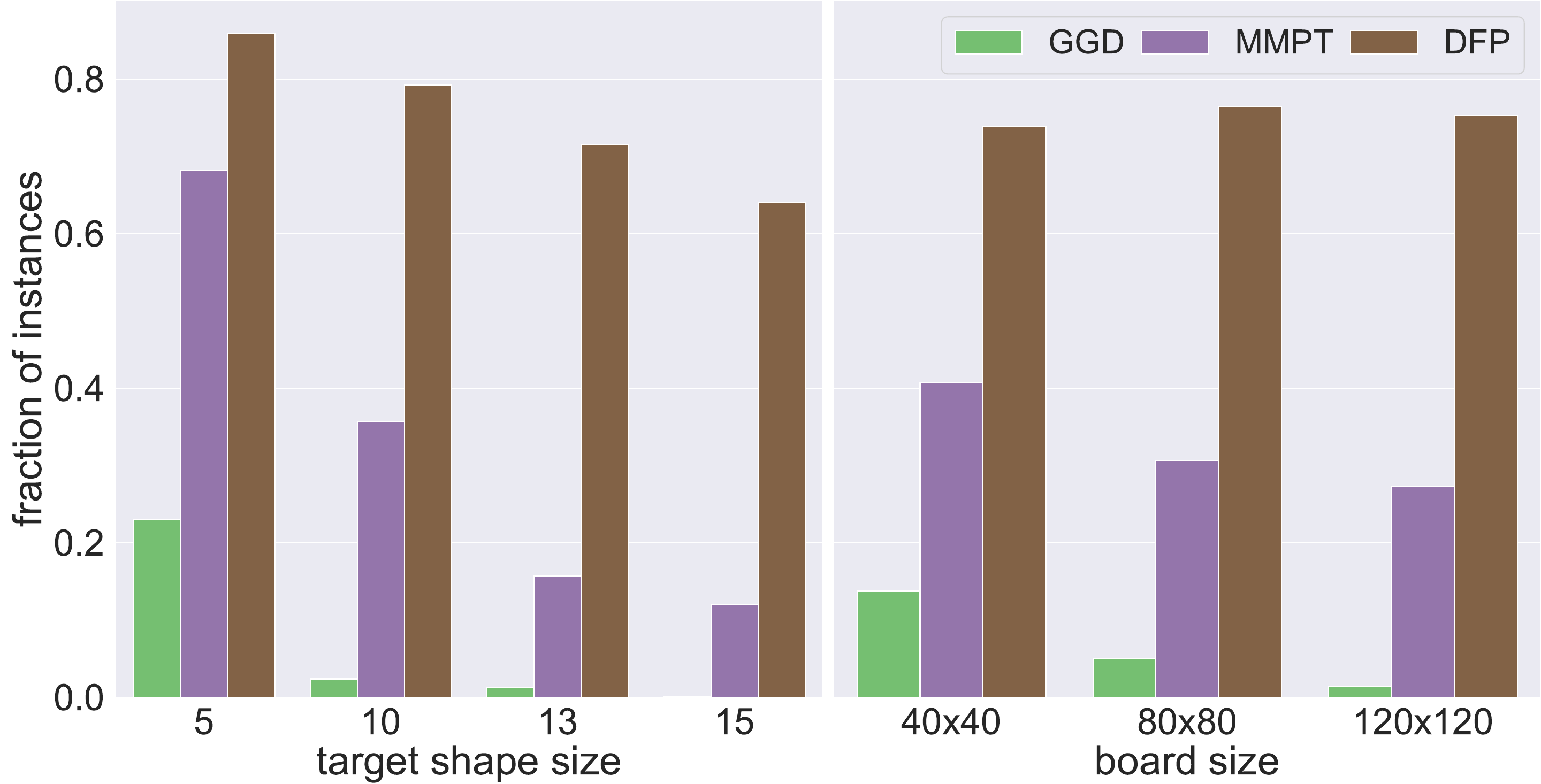}
\caption [Fraction of \texttt{InstanceSet2} solved by several planners] {Fraction of instances from \texttt{InstanceSet2} that were solved by three motion planning algorithms. For DFP the fraction of suitable instances with a fixed tile that were solved is shown.}
\label{fig:i2_performance2}
\end{figure}

\section{Conclusion and Future Work}
We designed motion planning algorithms for the assembly of shapes in the tilt model based on multiple different approaches, including search-based and sampling-based motion planning algorithms. 
We evaluated the effectiveness of these approaches experimentally on procedurally created boards and investigated the parameters that are most significant for the performance of motion planning algorithms in general, as well as the specific strength and weaknesses of the different approaches.
 The best complete algorithm analyzed was GGD. 
 This was outperformed by incomplete algorithms, the best of which was MMPT. All of these were outperformed by DFP, an incomplete motion planner that uses a fixed seed tile.
The evaluation of computational complexity for the Polyomino Assembly Problem without a fixed seed tile and without extra tiles is left to future research. 

RRT is a promising method to solve tilt motion planning problems. Combined with a computationally more expensive expansion step, it has the added benefit of requiring less memory than other approaches. A major challenge in this context is to find a cost-to-go function that is fast to compute and gives a good approximation of the actual distance between two configurations. 
Better best-first search algorithms could potentially be achieved with heuristics that not only depend on the distance of tiles to the target shape but instead consider, for example, the involved glue types and possible positions of tiles within the target polyomino. 

 On the instance side, it seems that the number of moving tiles drastically increases the time to solve. 
This result seems analogous to Schwartz and Sharir's result on moving disks through a polygon~\cite{schwartz1983piano}.
Thus, we expect our problem without a fixed seed tile to be PSPACE-complete. This, however, is left for future work.

\bibliography{biblio}

\begin{thebibliography}{10}

\bibitem{Balanza-Martinez2019a}
J.~Balanza-Martinez, D.~Caballero, A.~Cantu, T.~Gomez, A.~Luchsinger,
  R.~Schweller, and T.~Wylie.
\newblock {Relocation with uniform external control in limited directions}.
\newblock {\em The 22nd Japan Conference on Discrete and Computational
  Geometry, Graphs, and Games, JCDCGGG}, pages 39--40, 2019.

\bibitem{BalanzaMartinez2020}
J.~Balanza-Martinez, D.~Caballero, A.~A. Cantu, M.~Flores, T.~Gomez,
  A.~Luchsinger, R.~Reyes, R.~Schweller, and T.~Wylie.
\newblock {Hierarchical shape construction and complexity for slidable
  polyominoes under uniform external forces}.
\newblock In {\em Proceedings of the Annual ACM-SIAM Symposium on Discrete
  Algorithms}, pages 2625--2641, 2020.

\bibitem{Balanza-Martinez2019}
J.~Balanza-Martinez, D.~Caballero, A.~A. Cantu, L.~A. Garcia, A.~Luchsinger,
  R.~Reyes, R.~Schweller, and T.~Wylie.
\newblock {Full tilt: Universal constructors for general shapes with uniform
  external forces}.
\newblock In {\em Proceedings of the Annual ACM-SIAM Symposium on Discrete
  Algorithms}, pages 2689--2708, 2019.

\bibitem{Becker2013}
A.~Becker, E.~D. Demaine, S.~P. Fekete, G.~Habibi, and J.~McLurkin.
\newblock {Reconfiguring massive particle swarms with limited, global control}.
\newblock In {\em ALGOSENSORS}, pages 51--66, 2013.

\bibitem{Becker2014}
A.~Becker, E.~D. Demaine, S.~P. Fekete, and J.~McLurkin.
\newblock {Particle computation: Designing worlds to control robot swarms with
  only global signals}.
\newblock In {\em Proceedings - IEEE International Conference on Robotics and
  Automation}, pages 6751--6756, 2014.

\bibitem{Becker2020a}
A.~T. Becker, S.~P. Fekete, L.~Huang, P.~Keldenich, L.~Kleist, D.~Krupke,
  C.~Rieck, and A.~Schmidt.
\newblock {Targeted Drug Delivery: Algorithmic Methods for Collecting a Swarm
  of Particles with Uniform, External Forces}.
\newblock In {\em Proceedings - IEEE International Conference on Robotics and
  Automation}, pages 2508--2514, 2020.

\bibitem{Becker2017}
A.~T. Becker, S.~P. Fekete, P.~Keldenich, D.~Krupke, C.~Rieck, C.~Scheffer, and
  A.~Schmidt.
\newblock {Tilt Assembly: Algorithms for Micro-Factories that Build Objects
  with Uniform External Forces}.
\newblock In {\em International Symposium on Algorithms and Computation
  (ISAAC)}, volume~92, pages 11:1--11:13, 2017.

\bibitem{blumenberg2023computing}
P.~Blumenberg, A.~Schmidt, and A.~T. Becker.
\newblock Computing motion plans for assembling particles with global control.
\newblock In {\em IEEE International Conference on Intelligent Robots and
  Systems}, 2023.

\bibitem{Bonet2001}
B.~Bonet and H.~Geffner.
\newblock {Planning as heuristic search}.
\newblock {\em Artificial Intelligence}, 129(1-2):5--33, 2001.

\bibitem{Caballero2020b}
D.~Caballero, A.~A. Cantu, T.~Gomez, A.~Luchsinger, R.~Schweller, and T.~Wylie.
\newblock {Hardness of reconfiguring robot swarms with uniform external control
  in limited directions}.
\newblock {\em Journal of Information Processing}, 28:782--790, 2020.

\bibitem{Caballero2020a}
D.~Caballero, A.~A. Cantu, T.~Gomez, A.~Luchsinger, R.~Schweller, and T.~Wylie.
\newblock {Relocating Units in Robot Swarms with Uniform Control Signals is
  PSPACE-Complete}.
\newblock In {\em Proc. 32th Canadian Conference on Computational Geometry},
  pages 49--55, 2020.

\bibitem{caballero2021fast}
D.~Caballero, A.~A. Cantu, T.~Gomez, A.~Luchsinger, R.~Schweller, and T.~Wylie.
\newblock Fast reconfiguration of robot swarms with uniform control signals.
\newblock {\em Natural Computing}, 20(4):659--669, 2021.

\bibitem{hausdorff}
{European Mathematical Society}.
\newblock {Hausdorff metric}.
\newblock In {\em Encyclopedia of Mathematics}. EMS Press, 2022.

\bibitem{Hopcroft1984}
J.~E. Hopcroft, J.~T. Schwartz, and M.~Sharir.
\newblock {On the Complexity of Motion Planning for Multiple Independent
  Objects; PSPACE-Hardness of the "Warehouseman's Problem"}.
\newblock {\em The International Journal of Robotics Research}, 3(4):76--88,
  1984.

\bibitem{Kavraki1996}
L.~E. Kavraki, P.~{\v{S}}vestka, J.~C. Latombe, and M.~H. Overmars.
\newblock {Probabilistic roadmaps for path planning in high-dimensional
  configuration spaces}.
\newblock {\em IEEE Transactions on Robotics and Automation}, 12(4):566--580,
  1996.

\bibitem{Keller2021}
J.~Keller, C.~Rieck, C.~Scheffer, and A.~Schmidt.
\newblock Particle-based assembly using precise global control.
\newblock {\em Algorithmica}, 84(10):2871--2897, 2022.

\bibitem{konitzny2022gathering}
M.~Konitzny, Y.~Lu, J.~Leclerc, S.~P. Fekete, and A.~T. Becker.
\newblock Gathering physical particles with a global magnetic field using
  reinforcement learning.
\newblock In {\em IEEE/RSJ IROS}, pages 10126--10132, 2022.

\bibitem{Lavalle1998}
S.~M. Lavalle.
\newblock {Rapidly-Exploring Random Trees: A New Tool for Path Planning}.
\newblock Technical report, Iowa State University, 1998.

\bibitem{Schmidt2018}
A.~Schmidt, S.~Manzoor, L.~Huang, A.~T. Becker, and S.~P. Fekete.
\newblock {Efficient parallel Self-Assembly under uniform control inputs}.
\newblock {\em IEEE Robotics and Automation Letters}, 3(4):3521--3528, 2018.

\bibitem{schwartz1983piano}
J.~T. Schwartz and M.~Sharir.
\newblock On the piano movers' problem: Iii. coordinating the motion of several
  independent bodies: The special case of circular bodies moving amidst
  polygonal barriers.
\newblock {\em The International Journal of Robotics Research}, 2(3):46--75,
  1983.

\bibitem{Yun1997}
X.~Yun and K.~C. Tan.
\newblock {Wall-following method for escaping local minima in potential field
  based motion planning}.
\newblock In {\em 1997 8th International Conference on Advanced Robotics.
  Proceedings. ICAR'97}, pages 421--426. IEEE, 1997.

\bibitem{zhang2017rearranging}
Y.~Zhang, X.~Chen, H.~Qi, and D.~Balkcom.
\newblock Rearranging agents in a small space using global controls.
\newblock In {\em IEEE/RSJ IROS}, pages 3576--3582. IEEE, 2017.

\end{thebibliography}
\bibliographystyle{abbrv}
\end{document}